\pdfoutput=1
\documentclass[final]{hld2025} 

\usepackage[T1]{fontenc}
\usepackage{lmodern}
\usepackage{bm}
\usepackage{booktabs}
\usepackage{multirow}
\usepackage{float}
\usepackage{wrapfig}
\hypersetup{hypertexnames=false}

\newtheorem{observation}{Observation}[section]

\makeatletter
\renewcommand*{\@titlefoot}{}
\renewcommand\ps@jmlrtps{%
  \let\@mkboth\@gobbletwo
  \def\@oddhead{}%
  \let\@evenhead\@oddhead
  \def\@oddfoot{\hfill \small\rmfamily \thepage \hfill}%
  \let\@evenfoot\@oddfoot
}
\makeatother

\title[Grouping Attention Heads for Muon]{When and Why Grouping Attention Heads Accelerates Muon Optimization}

\hldauthor{%
\Name{Hongtao Zhang}\thanks{Equal contribution} \Email{zhanghongtao24s@ict.ac.cn}\\
\addr {State Key Laboratory of AI Safety, Institute of Computing Technology, Chinese Academy of Sciences; School of Advanced Interdisciplinary Sciences, University of Chinese Academy of Sciences} 
\AND
\Name{Wenjie Zhou}\footnotemark[1] \Email{zj4323005@gmail.com}\\
\addr {State Key Laboratory of AI Safety, Institute of Computing Technology, Chinese Academy of Sciences; University of Chinese Academy of Sciences}
\AND
\Name{Wei Chen}\thanks{Corresponding author} \Email{chenwei2022@ict.ac.cn}\\
\addr {State Key Laboratory of AI Safety, Institute of Computing Technology, Chinese Academy of Sciences; University of Chinese Academy of Sciences}
\AND
\Name{Xueqi Cheng} \Email{cxq@ict.ac.cn}\\
\addr {State Key Laboratory of AI Safety, Institute of Computing Technology, Chinese Academy of Sciences; University of Chinese Academy of Sciences}
}

\begin{document}

\maketitle

\begin{abstract}%
Muon orthogonalizes matrix updates, but multi-head attention naturally operates
at the level of heads. This granularity mismatch raises the question of whether
Muon should be applied to the full attention projection, to individual heads, or
to intermediate head groups. We study this question through a one-step descent
comparison between full-matrix Muon and group-wise Muon.
Our analysis reveals a trade-off between the \textbf{group-wise whitening gain} from group-wise updates and the \textbf{grouping-induced norm cost}, an additional update-norm cost caused by replacing full-matrix whitening with group-wise whitening.
Motivated by this trade-off, we propose \textbf{Group Muon}, which treats head group size and grouping rule as optimizer hyperparameters. On GPT-2 Small trained on FineWeb,
appropriate grouping improves validation loss over both full-QKV Muon and
fully head-wise MuonSplit.
\end{abstract}


\section{Introduction}

Muon~\citep{jordan2024muon} orthogonalizes the momentum update of each 2-D parameter
matrix, so its natural unit is a matrix. Multi-head attention, however, later
splits the query, key, and value projections into heads that compute attention
independently. Applying Muon to the full projection couples all heads through a
single polar factor, whereas splitting the projection gives each head, or group
of heads, an independent orthogonalized update. We ask:

\vspace{-0.75em}
\begin{center}
\emph{Q: What is the right granularity for applying Muon to multi-head attention?}
\end{center}
\vspace{-0.75em}

This question is already practically relevant. GLM-5~\citep{glm5team2026glm5vibecodingagentic} uses Muon Split, applying
orthogonalization to per-head blocks of attention projections for MLA-based
attention. A record on the modded-nanoGPT speedrun~\citep{Speedrun-PR253}  also recently improved its
record by orthogonalizing Q and K in pairs of heads rather than across the full
multi-head matrix. These examples suggest that Muon's
matrix scope is an optimizer design choice, not a mere implementation detail.
Appendix~\ref{app:related-work} provides a fuller related-work discussion.

Yet finer splitting is not always better. As shown in Figure~\ref{fig1},
head-wise MuonSplit reduces validation loss faster early in training, but is
later overtaken by full-matrix Muon. This suggests a stage-dependent trade-off:
splitting can improve descent at one stage while degrading optimization at
another.

\begin{figure}[htbp]
\centering
\includegraphics[width=1\textwidth]{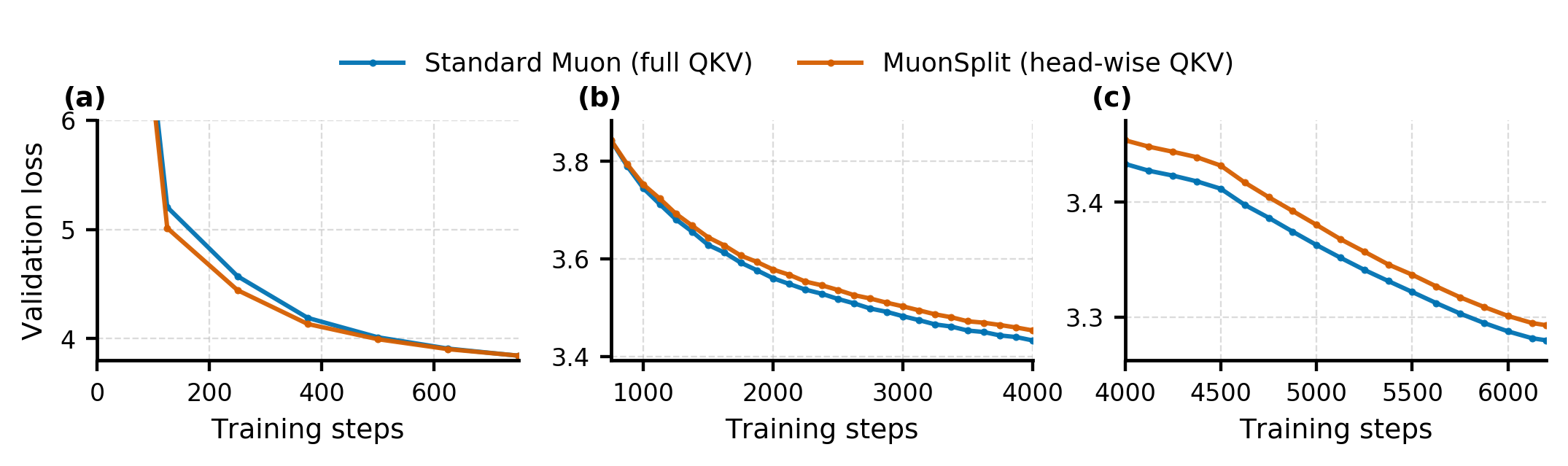}
\vspace{-0.75em}
\caption{Validation loss of full-matrix Muon and head-wise MuonSplit in the
GPT-2 Small speedrun setting. MuonSplit descends faster early, whereas
full-matrix Muon performs better later.}
\label{fig1}
\vspace{-1em}
\end{figure}
To answer this granularity question, we make the following contributions:

\begin{itemize}
    \item We provide a one-step descent analysis comparing full-matrix Muon and
    group-wise Muon. The resulting criterion identifies a trade-off between the
    \textbf{group-wise whitening gain}, which increases the first-order descent
    term from \(\|G\|_*\) to \(\sum_i \|G_i\|_*\), and the
    \textbf{grouping-induced norm cost}, a second-order cost induced by the
    larger norm of group-wise whitened updates. This criterion explains the
    early/late reversal in Figure~\ref{fig1}: 
   finer splitting can help when the group-wise whitening gain outweighs the grouping-induced norm cost, but may underperform when this gain is insufficient to compensate for the norm cost introduced by grouping.

    \item We propose \textbf{Group Muon}, a practical head-grouped Muon variant
    that treats head group size and grouping rule as explicit optimizer
    hyperparameters. On GPT-2 Small trained on FineWeb, appropriate intermediate
    grouping improves validation loss over both Full-QKV Muon and fully
    head-wise MuonSplit.
\end{itemize}


\section{Preliminaries}
\paragraph{Block structure.}
Partition $ W \in \mathbb{R}^{m \times d}$ into row groups
$W=[W_1^\top\cdots W_M^\top]^\top$,
where $W_i\in \mathbb{R}^{m_i\times d}$ and $\sum_{i=1}^M m_i=m$.
For objective $\mathcal{L}(W)$, set $G:=\nabla_W \mathcal{L}(W)\in \mathbb{R}^{m\times d}$ and
$G=[G_1^\top\cdots G_M^\top]^\top$, where $G_i:=\nabla_{W_i}\mathcal{L}(W)\in \mathbb{R}^{m_i\times d}$.
Set $r:=\mathrm{rank}(G) $ and $ r_i:=\mathrm{rank}(G_i)$.

\paragraph{Polar factor.}
Idealized Muon uses the gradient polar factor. For nonzero $X=U_X\Sigma_X V_X^{\top}$ with compact SVD, set $P(X):=U_XV_X^{\top}$ and $P(0)=0$. Then
$\label{polarFactor_identities}
    \langle{X}, {P(X)}\rangle=\|X\|_* ,
    \|P(X)\|_F^2=\mathrm{rank}(X),
$
where $\| \cdot \|_*$ denotes the nuclear norm.

\paragraph{Update rules.}
\textit{Full Matrix Muon} polar-whitens $G$ globally:
\begin{equation} \label{muon update}
    O_{\mathrm{all}}=P(G),
    \qquad
    W^+_{\mathrm{all}}=W-\eta O_{\mathrm{all}}.
\end{equation}
\textit{Group Muon} polar-whitens each $G_i$ separately:
\begin{equation} \label{group muon update}
    O_{\mathrm{grp}}=
    \begin{bmatrix}
        P(G_1)^\top&
        P(G_2)^\top&
        \cdots&
        P(G_M)^\top
    \end{bmatrix}^\top,
    \qquad
    W^+_{\mathrm{grp}}=W-\eta O_{\mathrm{grp}}.
\end{equation}

\paragraph{Experiment Setup}
\textbf{GPT-2 on FineWeb}: We train GPT-2 Small (124M) model on dataset using the highly optimized \texttt{nano-gpt} benchmark training recipe \cite{radford2019language} \footnote{\url{https://github.com/KellerJordan/modded-nanogpt}}. Details are provided in Appendix~\ref{app:group_head_muon_experimental_setting}.

\section{When and Why Grouping Helps: Gain versus Norm Cost}
\label{sec:when and why theoretical analysis}
We first compare full matrix Muon and Group Muon through a one-step smoothness bound. 
Assume that $\mathcal{L}$ is $\beta$-smooth with respect to $W$, i.e.,
\begin{equation}\label{beta_smooth_ass}
    \mathcal{L}(W+\Delta)
    \le
    \mathcal{L}(W)
    +\langle \nabla\mathcal{L}(W),\Delta\rangle
    +\frac{\beta}{2}\|\Delta\|_F^2 .
\end{equation}

\begin{proposition}[One-step descent comparison]
\label{prop:one-step-comparison}
Let $G=\nabla_W\mathcal{L}(W)$ and let $G_i$ be the $i$-th group block of $G$. 
Denote $r:=\mathrm{rank}(G), r_i:=\mathrm{rank}(G_i).$
For the full matrix Muon and Group Muon the corresponding guaranteed one-step decrease lower bounds are
\begin{equation*}
    D_{\mathrm{all}}
    =
    \eta\|G\|_*
    -\frac{\beta\eta^2}{2}r,
    \qquad
    D_{\mathrm{grp}}
    =
    \eta\sum_{i=1}^M\|G_i\|_*
    -\frac{\beta\eta^2}{2}\sum_{i=1}^M r_i .
\end{equation*}
Consequently, Group Muon has a larger guaranteed descent lower bound than full matrix Muon whenever
\begin{equation}
    \boxed{
    \sum_{i=1}^M \|G_i\|_*-\|G\|_*
    >
    \frac{\beta\eta}{2}
    \left(
        \sum_{i=1}^M r_i-r
    \right).
    }
    \label{ieq:main-criterion}
\end{equation}
\end{proposition}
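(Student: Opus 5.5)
The plan is to substitute each update rule into the $\beta$-smoothness inequality \eqref{beta_smooth_ass} with $\Delta=-\eta O$. Then I evaluate the inner-product and squared-norm terms with the polar factor identities $\langle X,P(X)\rangle=\|X\|_*$ and $\|P(X)\|_F^2=\mathrm{rank}(X)$. The criterion \eqref{ieq:main-criterion} should follow by subtracting the two lower bounds.

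\textbf{Full-matrix Muon.} Take $\Delta=-\eta P(G)$. Smoothness gives
\[
\mathcal{L}(W^+_{\mathrm{all}})\le\mathcal{L}(W)-\eta\langle G,P(G)\rangle+\tfrac{\beta\eta^2}{2}\|P(G)\|_F^2 .
\]
By the identities this becomes $\mathcal{L}(W)-\eta\|G\|_*+\tfrac{\beta\eta^2}{2}r$. Hence the guaranteed decrease $\mathcal{L}(W)-\mathcal{L}(W^+_{\mathrm{all}})$ is at least $D_{\mathrm{all}}$.

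\textbf{Group Muon.} Take $\Delta=-\eta O_{\mathrm{grp}}$. The row-block partition is an orthogonal decomposition of $\mathbb{R}^{m\times d}$ under the Frobenius inner product. Therefore
\[
\langle G,O_{\mathrm{grp}}\rangle=\sum_i\langle G_i,P(G_i)\rangle=\sum_i\|G_i\|_*,
\qquad
\|O_{\mathrm{grp}}\|_F^2=\sum_i\|P(G_i)\|_F^2=\sum_i r_i .
\]
Here the convention $P(0)=0$ covers zero blocks, which contribute $0$ to both sums, consistent with $r_i=0$. Plugging these into smoothness gives a decrease of at least $D_{\mathrm{grp}}$.

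\textbf{Comparison.} Compute
\[
D_{\mathrm{grp}}-D_{\mathrm{all}}=\eta\Bigl(\sum_i\|G_i\|_*-\|G\|_*\Bigr)-\tfrac{\beta\eta^2}{2}\Bigl(\sum_i r_i-r\Bigr).
\]
Dividing by $\eta>0$ shows this is positive exactly when \eqref{ieq:main-criterion} holds.

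As a sanity check, not needed for the proof, both bracketed quantities are nonnegative. First, $\sum_i\|G_i\|_*\ge\|G\|_*$ by duality: $\|G\|_*=\langle G,P(G)\rangle=\sum_i\langle G_i,[P(G)]_i\rangle$, and each block $[P(G)]_i$ has operator norm at most $1$. Second, $\sum_i r_i\ge r$ by subadditivity of rank for stacked blocks. This justifies calling them the gain and the cost.

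No step is a genuine obstacle. The only points needing care are the blockwise splitting of the inner product and norm, and the zero-block convention.
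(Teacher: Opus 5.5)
Your proposal is correct and follows the paper's proof essentially step for step. You substitute $\Delta=-\eta O$ into the smoothness inequality, evaluate the inner product and squared Frobenius norm blockwise using the polar-factor identities, and subtract the two bounds, dividing by $\eta>0$; your extra remarks (the $P(0)=0$ convention and the nonnegativity of the gain via duality and of the cost via rank subadditivity) are correct but not needed for the argument.
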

A detailed proof is provided in Appendix~\ref{app:proof-one-step-comparison}.

\paragraph{Gain--cost trade-off.}
In \eqref{ieq:main-criterion}, the left-hand side is the \textbf{group-wise whitening gain}, i.e., the first-order benefit of group-wise whitening. 
The right-hand side is the \textbf{grouping-induced norm cost}, which measures the additional update-norm cost caused by replacing full-matrix whitening with group-wise whitening.


\textbf{Takeaway.} This criterion gives a simple answer to when grouping helps: Group Muon is favored when the group-wise whitening gain outweighs the grouping-induced norm cost. 
Thus, the relative size of these two terms determines whether grouping improves over full matrix Muon. 
We first consider the small-cost regime.

\paragraph{Small-cost regime: near-full-rank gradients.}
\begin{observation}[Near-full-rank gradients]
\label{obs:near-full-rank}
Empirically, during the main training regime, the full gradient and the group gradients are close to full row rank. 
For a row-wise partition with $m\le d$, this corresponds to
$
    r\approx m, r_i\approx m_i, \sum_{i=1}^M r_i\approx r .
$
\end{observation}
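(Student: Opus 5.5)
The statement is empirical, so the plan has two parts. The first is a rigorous reduction showing that the displayed consequence follows from one hypothesis: that the full gradient has full row rank. The second is a structural argument plus a measurement protocol supporting that hypothesis during the main training regime.

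\emph{Step 1 (deterministic reduction).} Suppose $m\le d$ and $r=\mathrm{rank}(G)=m$. Then the $m$ rows of $G$ are linearly independent. The rows of each block $G_i$ are a subset of these rows, so they are also independent, which gives $r_i=m_i$ exactly. Summing over blocks yields $\sum_{i=1}^M r_i=\sum_{i=1}^M m_i=m=r$, so the grouping-induced norm cost in \eqref{ieq:main-criterion} vanishes identically. More generally I would record the inequalities
\[
r\le\sum_{i=1}^M r_i\le \min\Bigl(m,\;r+\sum_{i=1}^M (m_i-r_i)\Bigr)+\bigl(\textstyle\sum_i r_i - r\bigr)\cdot 0 ,
\]
or, more usefully, the perturbative statement that $\sum_i r_i-r\le m-r$. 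This holds because $\sum_i r_i\le \sum_i m_i=m$ and $r\ge 0$. Hence near full row rank, $r\ge m-k$, forces the cost gap to be at most $k$. This turns ``$\approx$'' in the observation into a quantitative bound controlled by the rank deficit of $G$ alone.

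\emph{Step 2 (why $G$ is near full row rank).} For an attention projection $W$ acting on token features, the gradient has the form $G=\sum_{t=1}^{N}\delta_t x_t^\top=\Delta X^\top$. Here $N$ is the number of tokens in the batch, $\delta_t\in\mathbb{R}^m$ are backpropagated signals, and $x_t\in\mathbb{R}^d$ are inputs. In the speedrun setting $N$ is far larger than $d$, so $\mathrm{rank}(G)\le\min(m,d,N)=m$ is not capped by the batch. For generic (absolutely continuous) $\Delta$ and $X$, the product has full rank $\min(m,d)$ almost surely, since rank deficiency is a proper algebraic subvariety.

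\emph{Step 3 (the main obstacle).} Generic full rank in exact arithmetic says nothing about numerical rank. Trained networks often have gradients with rapidly decaying spectra, and this is presumably also why the observation is restricted to the ``main training regime''. I would therefore replace $\mathrm{rank}$ by an effective rank, for example the count of singular values above $\epsilon\,\sigma_{\max}$. I would then verify empirically that $r/m$ and each $r_i/m_i$ stay close to $1$ across training checkpoints for the Q, K and V blocks of GPT-2 Small. The same measurements should be used to check whether early training shows larger rank deficits, which would be consistent with Figure~\ref{fig1}. This measurement, rather than any formal step, is what actually establishes the observation.
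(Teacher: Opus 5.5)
Your proposal is correct and rests on the same evidence as the paper, with a deterministic reduction added on top. The paper gives no derivation, only the measurements of Appendix~\ref{appendix:fullrank}. There, the ranks of the full $W_Q,W_K$ gradients and momenta and of their row-grouped blocks are normalized by row dimension and tracked over training. The paper reads off $r/m$, $r_i/m_i$ and the comparison of $\sum_i r_i$ with $r$ as three separate facts.

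Your Step~1 shows that the last two follow from the first. Besides $r=m\Rightarrow r_i=m_i$ and $0\le\sum_i r_i-r\le m-r$, you can add $r\le r_i+(m-m_i)$. Then a deficit $m-r=k$ gives $r_i\ge m_i-k$ for every block. So a single measurement of the full-matrix deficit controls everything, and the cost term in \eqref{ieq:main-criterion} is at most $\frac{\beta\eta}{2}(m-r)$. This turns the paper's ``$\approx$'' into a quantitative statement.

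Because Step~3 switches to thresholded rank, say explicitly that the reduction survives. Row-deletion interlacing gives $\sigma_{m_i-k}(G_i)\ge\sigma_{m-k}(G)$ and $\sigma_1(G_i)\le\sigma_1(G)$. Hence full or nearly full thresholded rank of $G$ forces the same for every block, for absolute or relative thresholds. In the other direction, the paper measures momentum as well as raw gradients, and momentum is what Muon actually orthogonalizes; your protocol should include it.

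\textbf{Corrections.}

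The displayed chain in Step~1 is false. Dropping the zero term, its middle bound $\sum_i r_i\le r+\sum_i(m_i-r_i)$ is equivalent to $2\sum_i r_i\le r+m$. This fails for aligned blocks as in Appendix~\ref{app:aligned-low-rank-example}: taking $G_1=G_2$ of full row rank $m_1$ gives $\sum_i r_i=2m_1$ against a right-hand side of $m_1$. Delete the display and keep only $0\le\sum_i r_i-r\le m-r$, which follows from rank subadditivity over row blocks together with $r_i\le m_i$.

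Your aside that larger early rank deficits would be consistent with Figure~\ref{fig1} runs against the criterion. A rank deficit is exactly what allows the cost term to be nonzero, so larger early deficits would, if anything, predict splitting to be \emph{worse} early. Figure~\ref{fig1} shows the opposite, and the paper invokes near-full rank precisely to explain why $g=1$ wins in the early stage.
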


We provide supporting empirical evidence for this observation in Appendix~\ref{appendix:fullrank}.

\paragraph{Zero-cost case: full row rank.}
If $\mathrm{rank}(G)=m$ and $\mathrm{rank}(G_i)=m_i$ for all groups, then $\sum_i r_i-r=0$, so \eqref{ieq:main-criterion} reduces to $\sum_i\|G_i\|_*>\|G\|_*$. 
By nuclear-norm subadditivity, $\|G\|_*\le\sum_i\|G_i\|_*$, with equality for the row-wise partition iff the block row spaces are mutually orthogonal. Thus, ideal Muon zero-cost grouping is generically favored.


\paragraph{Practical cost: effective rank and noise.}
For ideal Muon with the exact polar factor, the grouping-induced norm cost reduces to the rank-dependent term because
$
    \|P(G)\|_F^2=\mathrm{rank}(G).
$
However, practical Muon computes an approximate whitening direction, e.g., through a finite number of Newton--Schulz iterations, and the gradients contain stochastic noise.
Appendix~\ref{appendix_Grouping-induced-Norm-Cost} derives this non-ideal form and empirically shows that the Frobenius-norm gap, so the grouping-induced norm cost should not be assumed to vanish in practical Muon implementations.
In this case, the exact algebraic rank is not the most direct proxy for the second-order term in the descent bound.

\paragraph{Over-splitting risk: aligned low-rank gradients.}

In an aligned low-rank case with $k$ nonzero groups, the grouping-induced norm cost is
$
    \frac{\beta\eta}{2}(k-1),
$
which increases monotonically with $k$. 
This indicates that the grouping-induced norm cost should not be assumed to vanish in practical Muon implementations. We provide more details in Appendix~\ref{app:aligned-low-rank-example}.


Thus, Grouping can increase the group-wise whitening gain, but overly fine splitting can also increase the grouping-induced norm cost through larger whitened update norms.
Therefore, the key design question is not simply whether to split the QKV, but how coarsely the attention heads should be grouped.

\section{Group Muon Algorithm}
\label{sec:experiments}

The analysis above suggests that Group Muon's effectiveness depends on grouping structure: overly fine splitting may add grouping-induced norm cost, while intermediate groups can preserve group-wise whitening gain. We therefore introduce a practical Group Muon algorithm with tunable head group size and evaluate grouping rules. 

We form groups over attention heads using a head group size $g$ and a grouping rule $\mathcal{S}$. 
Here, $g$ denotes the number of heads per group, and $\mathcal{S}$ specifies how heads are assigned. 
We consider adjacent $(\mathrm{adj})$, interval $(\mathrm{int})$, and random $(\mathrm{rand})$ grouping, corresponding to consecutive heads, maximally separated heads, and randomly resampled size-$g$ partitions, respectively. 
More details are provided in Appendix~\ref{app:group_head_muon_experimental_setting}.

Based on these definitions, we propose the Group Muon update described in Algorithm~\ref{alg:group-muon}.

We apply Group Muon to attention projection matrices in three settings: QK groups the query and key projections, V groups the value projection, and Fixed-QK+V fixes QK to its best observed grouping while varying V.

\begin{wrapfigure}[17]{r}{0.52\textwidth}
\begin{algorithm2e}[H]
\scriptsize
\label{alg:group-muon}
 \caption{GroupMuon}
 \SetAlgoLined
  \KwData{parameter matrix $W$, gradient $G$, momentum buffer $M$, learning rate $\eta$, momentum coefficient $\mu$, number of heads $H$, head group size $g$, grouping rule $\mathcal{S}$}
  \KwResult{updated parameter matrix $W$}
  split $G$ and $M$ into head-wise blocks $\{G_h\}_{h=1}^{H}$ and $\{M_h\}_{h=1}^{H}$\;
  construct head groups $\mathcal{P}=\{P_1,\ldots,P_{H/g}\}$ according to $\mathcal{S}$\;
  \ForEach{group $P \in \mathcal{P}$}{
    $\widetilde{G}_P \leftarrow \mathrm{Concat}(\{G_h: h \in P\})$\;
    $\widetilde{M}_P \leftarrow \mathrm{Concat}(\{M_h: h \in P\})$\;
    $\widetilde{M}_P \leftarrow \mu \widetilde{M}_P + \widetilde{G}_P$\;
    $\widetilde{O}_P \leftarrow \mathrm{NewtonSchulz}(\widetilde{M}_P)$\;
    split $\widetilde{O}_P$ back into head-wise updates $\{O_h: h \in P\}$\;
  }
  merge all head-wise updates into $O$\;
  $W \leftarrow W - \eta O$\;
  return $W$\;
\end{algorithm2e}
\vspace{-0.5em}
\end{wrapfigure}
As a representative case, we analyze the QK random setting to illustrate how the head group size affects the balance between whitening gain and norm cost.
In the early stage(Fig~\ref{fig2}(a)), the finest grouping $g=1$ achieves the lowest validation loss, indicating that fine-grained head-wise whitening can be beneficial at the beginning of training. 
This is consistent with Observation~\ref{obs:near-full-rank}: when the full gradient and group gradients are close to full row rank, the idealized grouping-induced norm cost in \eqref{ieq:main-criterion} is small, so the group-wise whitening gain can dominate the comparison.

However, this fine-splitting advantage does not persist throughout training. 
As training proceeds(Fig~\ref{fig2}(b)), the coarser QK random grouping with $g=6$ surpasses $g=1$ and gradually becomes the best-performing setting in both Figure~\ref{fig2}(c) and Table~\ref{tab:group_results}. 
This reversal is consistent with the gain--cost view: finer splitting may increase the group-wise whitening gain, but it also introduces a grouping-induced norm cost. 
When the additional gain no longer compensates for this cost, a coarser grouping can become preferable. 
This also helps explain why fully head-wise Muon does not necessarily outperform Full-QKV Muon, despite its larger first-order group-wise whitening term.

\begin{table}[H]
\centering
\vspace{-0.5em}
\caption{
Validation loss under different Group Muon settings.
In the Fixed-QK+V setting, QK is fixed to its best grouping configuration($g=6$, rand.), while the V grouping is varied.
}
\vspace{-0.75em}
\label{tab:group_results}
\footnotesize
\setlength{\tabcolsep}{4pt}
\renewcommand{\arraystretch}{0.88}
\begin{tabular}{llccccc}
\toprule
\multicolumn{2}{l}{Baselines} 
& \multicolumn{5}{c}{Validation loss} \\
\midrule
\multicolumn{2}{l}{Full-QKV Muon} 
& \multicolumn{5}{c}{3.2780} \\
\multicolumn{2}{l}{Head-wise QKV MuonSplit} 
& \multicolumn{5}{c}{3.2799} \\

\midrule
GroupMuon(\textbf{ours}) & Rule 
& $g=1$ & $g=2$ & $g=3$ & $g=4$ & $g=6$ \\
\midrule

\multirow{3}{*}{QK}
& adj. 
& 3.2787 & 3.2774 & 3.2770 & 3.2757 & 3.2743 \\
& int. 
& 3.2778 & 3.2776 & 3.2767 & 3.2745 & 3.2750 \\
& rand. 
& \textbf{3.2759} & \textbf{3.2758} & \textbf{3.2746} & \textbf{3.2731} & \textbf{3.2722} \\

\midrule

\multirow{2}{*}{V}
& adj. 
& 3.2806 & 3.2765 & 3.2780 & 3.2743 & 3.2781 \\
& rand. 
& 3.2806 & 3.2792 & 3.2769 & 3.2765 & 3.2769 \\

\midrule

\multirow{2}{*}{Fixed-QK+V}
& adj. 
& 3.2799 & 3.2770 & 3.2762 & 3.2761 & 3.2746 \\
& rand. 
& 3.2788 & 3.2782 & 3.2740 & 3.2744 & 3.2752 \\

\bottomrule
\end{tabular}
\vspace{-0.75em}
\end{table}


By contrast, $g=6$ provides a more stable grouping scale in the later stage, retaining more structure than Full-QKV Muon while avoiding excessive head-wise fragmentation. 
Accordingly, Q/K random grouping with $g=6$ achieves the best validation loss among the settings, whereas additional V grouping brings no consistent improvement. 
This suggests that moderately coarse Q/K grouping can be preferable for sustained training performance.

\begin{figure}[H]
\centering
\includegraphics[width=0.98\textwidth]{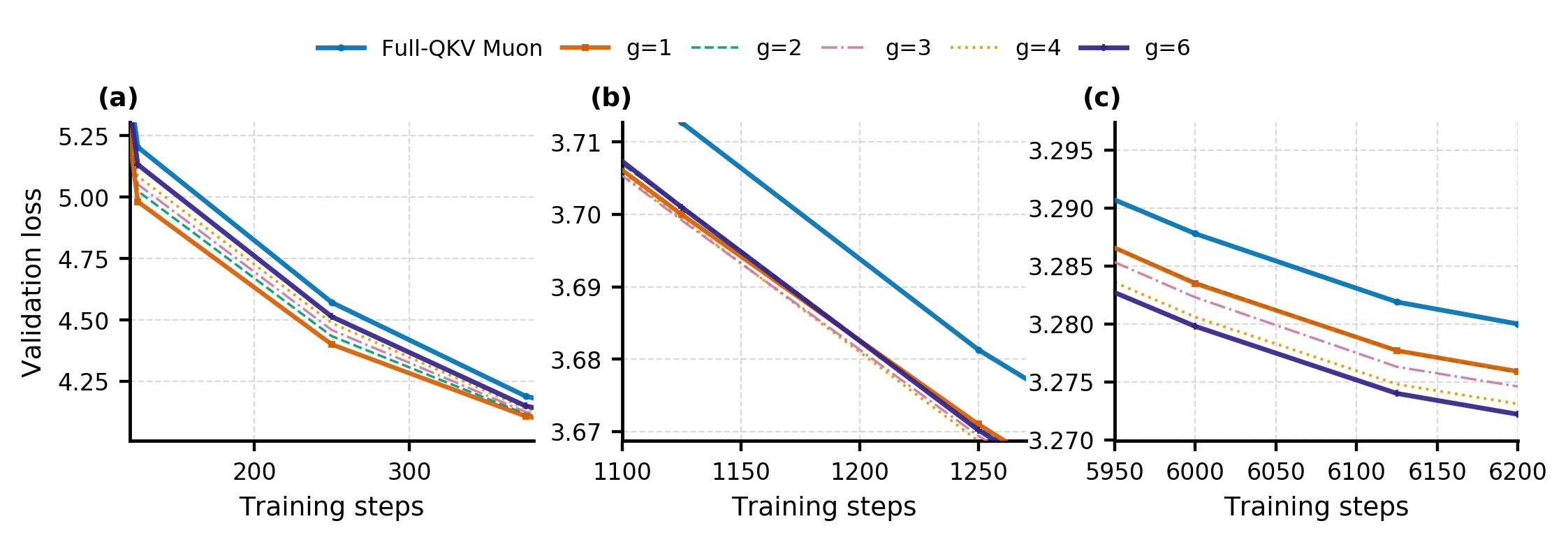}
\vspace{-1em}
\caption{
Validation loss of Full-QKV Muon and random Group Muon. 
In the early stage (a), smaller group sizes perform better, with $g=1$ achieving the lowest validation loss. 
In the middle and late stages, (b) and (c), larger group sizes become more favorable, with $g=6$ surpassing $g=1$ and performing best.
}
\label{fig2}
\vspace{-1em}
\end{figure}

\section{Conclusion}

Grouping attention heads can accelerate Muon when the group-wise whitening gain outweighs the grouping-induced norm cost. 
This gain--cost view motivates intermediate grouping scales, and our experiments find Q/K random grouping with $g=6$ to be the best among the tested settings.

\newpage

\bibliography{sample}
\newpage
\clearpage
\appendix

\section{Related Work}
\label{app:related-work}

\paragraph{Muon for large-scale language-model training.}
Muon applies matrix orthogonalization to optimizer updates and has rapidly become a practical alternative to AdamW for hidden layers in neural-network training~\citep{jordan2024muon}. 
Recent large-scale studies show that Muon can improve the compute--loss trade-off in LLM pretraining~\citep{liu2025muonscalablellmtraining,ai2025practicalefficiencymuonpretraining}, while complementary analyses investigate learning-rate transfer, spectral regularization, weight decay, and the effect of approximate Newton--Schulz orthogonalization~\citep{boreiko2025towards,shulgin2025idealanalyzinginexactmuon}. 
Our work is orthogonal to these scaling and approximation studies: we keep the Muon update family fixed and ask which attention-head granularity should define the matrix blocks being orthogonalized.

\paragraph{Block-wise and distributed orthogonalization.}
Several recent methods reduce the cost of orthogonalized updates by changing the matrix on which orthogonalization is performed. 
Dion replaces dense Newton--Schulz orthogonalization with distributed low-rank orthonormalized updates designed for sharded weights~\citep{ahn2025diondistributedorthonormalizedupdates}, and Dion2 further simplifies this idea by sampling rows or columns to shrink the orthogonalization problem~\citep{ahn2025dion2simplemethodshrink}. 
Block-wise Muon variants also split weight matrices into tiles to make tensor-parallel training more efficient~\citep{boreiko2025towards}. 
MuonBP applies block-wise orthogonalization on device-local matrix shards and periodically performs full orthogonalization to recover the stability of baseline Muon~\citep{khaled2025muonbpfastermuonblockperiodic}. 
The key distinction from MuonBP is that our blocks are \emph{attention-head groups}, not primarily device shards: MuonBP is motivated by distributed communication and throughput, whereas Group Muon treats the head group size and grouping rule as optimizer hyperparameters and analyzes the trade-off between group-wise whitening gain and grouping-induced norm cost.

\paragraph{Attention-head structure.}
Our focus on head grouping is motivated by the internal multi-head structure of attention. 
Recent work models multi-head attention as an interacting multi-player system, emphasizing that different heads can compete, coordinate, and induce cross-head externalities~\citep{chakrabarti2026multiheadattentionmultiplayergame}. 
This perspective is consistent with treating attention heads as meaningful optimizer units, but our goal is different: rather than regularizing head interactions, we study how the orthogonalization scope of Muon should be chosen over heads or groups of heads.

\section{Proof of Proposition~\ref{prop:one-step-comparison}}
\label{app:proof-one-step-comparison}

\begin{proof}
By $\beta$-smoothness\eqref{beta_smooth_ass}, for any update direction $O$ and stepsize $\eta>0$, taking $\Delta=-\eta O$ gives
\begin{equation}
    \mathcal{L}(W-\eta O)
    \le
    \mathcal{L}(W)
    -\eta \langle G,O\rangle
    +\frac{\beta\eta^2}{2}\|O\|_F^2 .
    \label{eq:appendix-smoothness}
\end{equation}

We first apply \eqref{eq:appendix-smoothness} to the full matrix Muon direction
\[
    O_{\mathrm{all}}=P(G).
\]
Using the polar-factor identities
\[
    \langle G,P(G)\rangle=\|G\|_*,
    \qquad
    \|P(G)\|_F^2=\mathrm{rank}(G)=r,
\]
we obtain
\begin{equation}
    \mathcal{L}(W^+_{\mathrm{all}})
    \le
    \mathcal{L}(W)
    -\eta\|G\|_*
    +\frac{\beta\eta^2}{2}r .
\end{equation}
Therefore the guaranteed one-step decrease lower bound for full matrix Muon is
\begin{equation}
    D_{\mathrm{all}}
    =
    \eta\|G\|_*
    -\frac{\beta\eta^2}{2}r .
\end{equation}

Next consider the Group Muon direction
\[
    O_{\mathrm{grp}}
    =
    \begin{bmatrix}
        P(G_1)^\top&
        P(G_2)^\top&
        \cdots&
        P(G_M)^\top
    \end{bmatrix}^\top .
\]
Since $G$ and $O_{\mathrm{grp}}$ share the same block structure, we have
\begin{equation}
    \langle G,O_{\mathrm{grp}}\rangle
    =
    \sum_{i=1}^M \langle G_i,P(G_i)\rangle
    =
    \sum_{i=1}^M \|G_i\|_* .
\end{equation}
Similarly, because the blocks are disjoint,
\begin{equation}
    \|O_{\mathrm{grp}}\|_F^2
    =
    \sum_{i=1}^M \|P(G_i)\|_F^2
    =
    \sum_{i=1}^M \mathrm{rank}(G_i)
    =
    \sum_{i=1}^M r_i .
\end{equation}
Substituting these two identities into \eqref{eq:appendix-smoothness} gives
\begin{equation}
    \mathcal{L}(W^+_{\mathrm{grp}})
    \le
    \mathcal{L}(W)
    -\eta\sum_{i=1}^M \|G_i\|_*
    +\frac{\beta\eta^2}{2}\sum_{i=1}^M r_i .
\end{equation}
Thus the guaranteed one-step decrease lower bound for Group Muon is
\begin{equation}
    D_{\mathrm{grp}}
    =
    \eta\sum_{i=1}^M\|G_i\|_*
    -\frac{\beta\eta^2}{2}\sum_{i=1}^M r_i .
\end{equation}

Group Muon has a larger guaranteed descent lower bound than full matrix Muon when
\[
    D_{\mathrm{grp}}>D_{\mathrm{all}}.
\]
Substituting the two expressions above, this condition becomes
\[
    \eta\sum_{i=1}^M\|G_i\|_*
    -\frac{\beta\eta^2}{2}\sum_{i=1}^M r_i
    >
    \eta\|G\|_*
    -\frac{\beta\eta^2}{2}r .
\]
Rearranging and dividing by $\eta>0$ yields
\[
    \sum_{i=1}^M \|G_i\|_*-\|G\|_*
    >
    \frac{\beta\eta}{2}
    \left(
        \sum_{i=1}^M r_i-r
    \right),
\]
which proves the comparison criterion.
\end{proof}

\section{Experimental Setting}

\label{app:group_head_muon_experimental_setting}

\paragraph{GPT-2 Small on FineWeb.}

\paragraph{Models and Architecture.}
We conduct all head-grouping experiments on a GPT-2 Small style decoder-only Transformer \citep{radford2019language, vaswani2017attention} trained from scratch on FineWeb. The model uses $n_{\text{layer}}=12$, $d_{\text{model}}=768$, and $n_{\text{head}}=12$, giving a head dimension of $d_{\text{head}}=64$. Following the modded NanoGPT-style implementation, the architecture uses tied input/output embeddings, RMSNorm\citep{zhang2019root}, Rotary Positional Embeddings (RoPE)\citep{su2024roformer}, bias-free linear layers, and a decoder-only causal attention stack.

\paragraph{Dataset and Tokenization.}
All experiments use the FineWeb10B binary shards \citep{penedo2024fineweb} with the GPT-2 BPE vocabulary of size 50,257. We use a context length of $L=1024$ tokens. Validation loss is evaluated every 125 optimization steps on a fixed validation budget of 10,485,760 tokens.

\paragraph{Shared Hyperparameters.}
All runs use 4-GPU distributed data parallel training with a global batch size of 512 sequences, corresponding to 524,288 tokens per optimization step. Each experiment is trained for 6,200 steps, or approximately 3.25B training tokens. We use a constant learning-rate phase followed by a linear warmdown over the final 1,800 steps, with no warmup.

\paragraph{Optimizer.}
We use a split optimization strategy. The language-model head and tied token embeddings are optimized with AdamW using learning rate $3.6\times 10^{-3}$, betas $(0.9,0.95)$, and zero weight decay. The Transformer blocks are optimized with Muon using learning rate $3.6\times 10^{-4}$ and momentum $0.95$. Unless otherwise specified, Muon orthogonalization uses the Newton-Schulz backend; grouped head updates use the batched Newton-Schulz backend. 

\paragraph{Optimizer.}
We use a split optimization strategy. 
The language-model head and tied token embeddings are optimized with AdamW using learning rate $3.6\times 10^{-3}$, betas $(0.9,0.95)$, and zero weight decay. 
The Transformer blocks are optimized with Muon using base learning rate $3.6\times 10^{-4}$ and momentum $0.95$. 
For head-wise and group-wise Muon variants, splitting a Q/K/V matrix changes the effective matrix shape on which the polar-factor update is applied. 
We therefore apply a shape-aware learning-rate transfer rule\footnote{We follow the Muon learning-rate transfer practice described in \url{https://kexue.fm/archives/11416}.}: the Muon learning rate is transferred from the original full-matrix shape to the corresponding grouped matrix shape, so that the update scale remains comparable across Full-QKV Muon, Head-wise QKV Muon, and Group Muon variants. 
Unless otherwise specified, Muon orthogonalization uses the Newton--Schulz backend, and grouped head updates use the batched Newton--Schulz backend.

\begin{table}[h]
\centering
\caption{Shared hyperparameters for the GPT-2 Small head-grouping experiments.}
\label{tab:group_head_muon_shared_hyperparams}
\resizebox{\textwidth}{!}{%
\begin{tabular}{lcccccccc}
\toprule
\textbf{Model} & \textbf{Steps} & \textbf{Tokens} & \textbf{Seq. Len.} & \textbf{Global Batch} & \textbf{Head LR} & \textbf{Body LR} & \textbf{WD} & \textbf{Schedule} \\
\midrule
GPT-2 Small & 6,200 & $\sim$3.25B & 1024 & 512 & $3.6\text{e-}3$ & $3.6\text{e-}4$ & 0.0 & Linear warmdown, 1,800 steps \\
\bottomrule
\end{tabular}%
}
\end{table}

\paragraph{Compared Methods.}
We organize the experiments according to the same categories used in Table~\ref{tab:group_results}. 
The two baselines are not parameterized by the group size $g$. 
\textbf{Full-QKV Muon} applies the standard Muon update to the packed QKV projection matrix. 
\textbf{Head-wise QKV Muon} first splits the packed QKV projection into Q, K, and V blocks, then further splits each block into individual attention heads and applies Muon independently to each head.

For grouped variants, we use the attention head as the basic unit. 
Since GPT-2 Small has $H=12$ attention heads, a group size $g\in\{1,2,3,4,6\}$ partitions the heads into $H/g$ groups. 
We consider three grouped settings:
\begin{itemize}
    \item \textbf{QK}: apply Group Muon only to the Q and K projections;
    \item \textbf{V}: apply Group Muon only to the V projection;
    \item \textbf{Fixed-QK+V}: fix Q and K to the best QK configuration, namely $g=6$ with random grouping, and sweep the grouping configuration of V.
\end{itemize}

\begin{table}[h]
\centering
\caption{
Experiment settings corresponding to Table~\ref{tab:group_results}. 
For grouped variants, $g$ denotes the number of heads per group. 
In random grouping, the head partition is resampled at every optimizer step.
}
\label{tab:group_head_muon_variants}
\resizebox{\textwidth}{!}{%
\begin{tabular}{llll}
\toprule
\textbf{Setting} & \textbf{Grouped target} & \textbf{Grouping rule} & \textbf{Sweep} \\
\midrule
Full-QKV Muon 
& QKV 
& -- 
& -- \\

Head-wise QKV Muon 
& QKV 
& per-head 
& -- \\

\midrule
QK 
& Q, K 
& adj. 
& $g\in\{1,2,3,4,6\}$ \\

QK 
& Q, K 
& int. 
& $g\in\{1,2,3,4,6\}$ \\

QK 
& Q, K 
& rand. 
& $g\in\{1,2,3,4,6\}$ \\

\midrule
V 
& V 
& adj. 
& $g_V\in\{1,2,3,4,6\}$ \\

V 
& V 
& rand. 
& $g_V\in\{1,2,3,4,6\}$ \\

\midrule
Fixed-QK+V 
& Q, K, V 
& adj. for V 
& $g_{QK}=6$ with rand., $g_V\in\{1,2,3,4,6\}$ \\

Fixed-QK+V 
& Q, K, V 
& rand. for V 
& $g_{QK}=6$ with rand., $g_V\in\{1,2,3,4,6\}$ \\
\bottomrule
\end{tabular}%
}
\end{table}

\paragraph{Grouping Rules.}
We index the 12 attention heads by $\{0,1,\ldots,11\}$. 
For a group size $g$, the number of groups is $K=12/g$. 
We use the following grouping rules.

\textbf{Adjacent grouping} partitions consecutive heads into the same group:
\[
    \{0,\ldots,g-1\},\ 
    \{g,\ldots,2g-1\},\ 
    \ldots,\ 
    \{12-g,\ldots,11\}.
\]
For example, when $g=3$, adjacent grouping gives
\[
    \{0,1,2\},\quad
    \{3,4,5\},\quad
    \{6,7,8\},\quad
    \{9,10,11\}.
\]

\textbf{Interval grouping} distributes nearby heads into different groups by using a fixed stride $K=12/g$. 
The $j$-th group is
\[
    \{j,\ j+K,\ j+2K,\ \ldots,\ j+(g-1)K\},
    \qquad j=0,\ldots,K-1.
\]
For example, when $g=3$, we have $K=4$, so interval grouping gives
\[
    \{0,4,8\},\quad
    \{1,5,9\},\quad
    \{2,6,10\},\quad
    \{3,7,11\}.
\]
When $g=6$, we have $K=2$, so interval grouping gives
\[
    \{0,2,4,6,8,10\},\quad
    \{1,3,5,7,9,11\}.
\]

\textbf{Random grouping} first samples a random permutation of the 12 heads and then splits the permuted list into contiguous groups of size $g$. 
For example, if $g=3$ and the sampled permutation is
\[
    (7,2,10,0,5,9,1,4,11,3,6,8),
\]
then the random groups are
\[
    \{7,2,10\},\quad
    \{0,5,9\},\quad
    \{1,4,11\},\quad
    \{3,6,8\}.
\]
In all random grouping experiments, the random partition is resampled at every optimizer step.

\paragraph{Fixed-QK+V and V-only Controls.}
The \textbf{Fixed-QK+V} experiments test whether additionally applying group-wise whitening to V improves upon the best QK-only configuration. 
In these runs, Q and K are fixed to random grouping with $g_{QK}=6$, while the group size and grouping rule of V are varied. 
The \textbf{V} experiments apply Group Muon only to the V projection, leaving Q and K on their full matrix Muon paths. 
These controls are used to separate the effect of grouping Q/K from the effect of grouping V.

\section{Experimental Observation}

\subsection{Nearly Full Rank}
\label{appendix:fullrank}

We empirically examine the rank structure of the attention(QK)-projection gradients and momentum during training. 
For each recorded step, we compute the rank of the full WQ/WK matrices and of their row-wise grouped submatrices. 
Ranks are normalized by the corresponding row dimension, so a value close to 1 indicates near-full row rank.

\begin{figure}[H]
\centering
\includegraphics[width=0.98\textwidth]{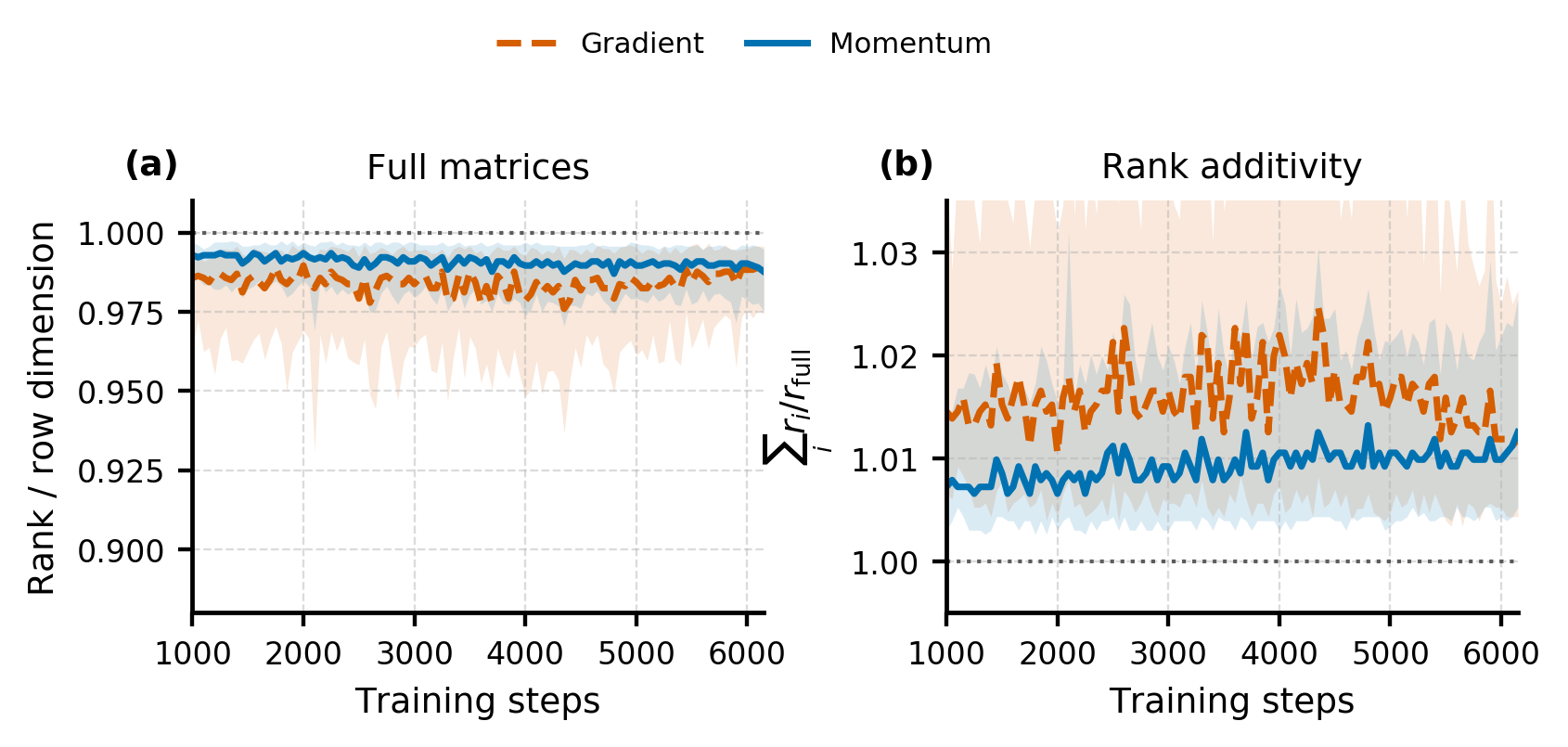}
\vspace{-0.75em}

\caption{
Rank behavior of gradients and momentum during training. 
Both full gradients and momentum remain close to full row rank, with momentum exhibiting a more stable rank ratio than gradients. 
For grouped updates, the sum of group ranks stays close to the full-matrix rank throughout the main training regime.
}
\label{fig:near_full_rank}
\vspace{-1em}
\end{figure}

Figure~\ref{fig:near_full_rank} shows that both the full matrices and their grouped row partitions remain close to full row rank after the early training phase. 
This supports the empirical condition used in our analysis: for a row-wise partition with group row sizes $m_i$, the full rank satisfies $r\approx m$, each group rank satisfies $r_i\approx m_i$, and the aggregate grouped rank $\sum_i r_i$ remains close to the full-matrix rank.

\subsection{Grouping-induced Norm Cost}
\label{appendix_Grouping-induced-Norm-Cost}

\paragraph{Non-ideal Muon whitening.}
The rank-dependent expression above relies on the ideal polar-factor identity
$
    \|P(G)\|_F^2=\mathrm{rank}(G).
$
For a practical Muon implementation, however, the whitening direction is produced by an approximate operator, such as a finite number of Newton--Schulz iterations. 
Let
$
    O_{\mathrm{full}}=\mathcal{A}(G)
$
denote the approximate whitened update computed from the full gradient, and let
$
    O_i=\mathcal{A}(G_i)
$
denote the approximate whitened update computed from the $i$-th group gradient. 
The group-wise update is then
\[
    O_{\mathrm{grp}}
    =
    \begin{bmatrix}
        O_1^\top&
        O_2^\top&
        \cdots&
        O_M^\top
    \end{bmatrix}^\top .
\]
Applying the smoothness bound to the full update gives
\[
    \mathcal{L}(W-\eta O_{\mathrm{full}})
    \le
    \mathcal{L}(W)
    -\eta\langle G,O_{\mathrm{full}}\rangle
    +\frac{\beta\eta^2}{2}\|O_{\mathrm{full}}\|_F^2 .
\]
Similarly, applying it to the group-wise update gives
\[
    \mathcal{L}(W-\eta O_{\mathrm{grp}})
    \le
    \mathcal{L}(W)
    -\eta\sum_{i=1}^M \langle G_i,O_i\rangle
    +\frac{\beta\eta^2}{2}\sum_{i=1}^M\|O_i\|_F^2 .
\]
Therefore, the additional second-order cost induced by replacing full-matrix whitening with group-wise whitening is
\[
    \frac{\beta\eta^2}{2}
    \left(
        \sum_{i=1}^M \|O_i\|_F^2
        -
        \|O_{\mathrm{full}}\|_F^2
    \right).
\]
Equivalently, after dividing the comparison criterion by $\eta>0$, the corresponding grouping-induced norm cost is
\[
    \frac{\beta\eta}{2}
    \left(
        \sum_{i=1}^M \|O_i\|_F^2
        -
        \|O_{\mathrm{full}}\|_F^2
    \right).
\]
Figure~\ref{fig3} shows that this Frobenius-norm gap is clearly nonzero in practice: head-wise whitening consistently yields larger $\sum_i \|O_i\|_F^2$ than the full-matrix counterpart $\|O_{\mathrm{full}}\|_F^2$.

\begin{figure}[H]
\centering
\includegraphics[width=0.98\textwidth]{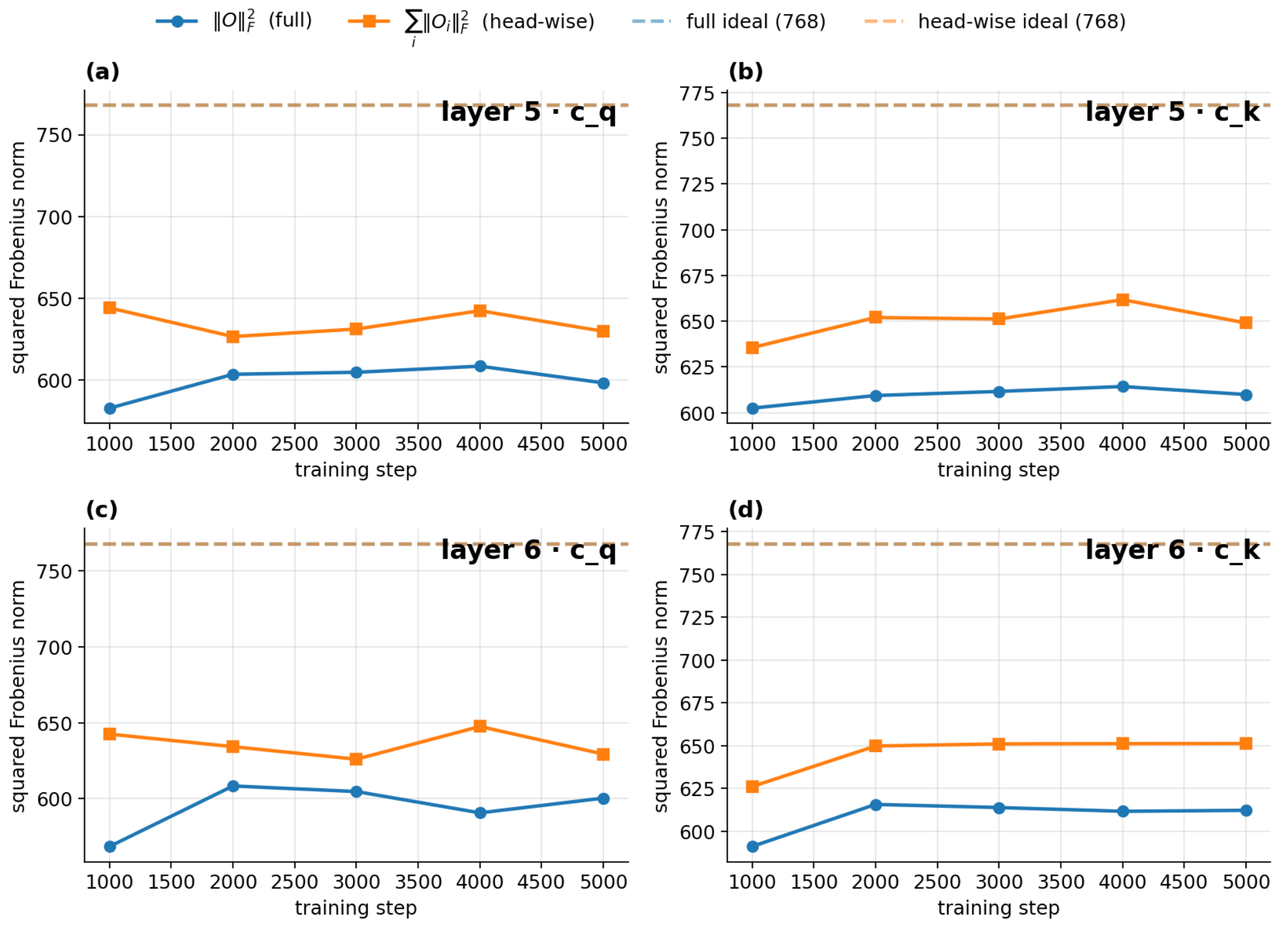}
\vspace{-0.75em}

\caption{
Squared Frobenius norm of the whitened Muon update under full-matrix whitening and head-wise whitening. 
Panels (a)--(d) report representative Q/K projections from layers 5 and 6; across all cases, head-wise whitening yields a larger $\sum_i \|O_i\|_F^2$ than the full-matrix counterpart $\|O\|_F^2$. 
This indicates that head-wise whitening introduces a larger grouping-induced norm cost, while both remain below the ideal full-rank value of 768.
}

\label{fig3}
\vspace{-1em}
\end{figure}

\section{Aligned Low-Rank Example}
\label{app:aligned-low-rank-example}

We provide a simple aligned low-rank example to illustrate that the gain--cost trade-off can already appear under ideal polar-factor Muon.
Consider $k$ nonzero group gradients that share the same right singular direction:
\[
    G_i=a_i u_i v^\top,
    \qquad
    \|u_i\|_2=\|v\|_2=1,
    \qquad
    i=1,\ldots,k ,
\]
where $a_i\neq 0$.
Then each nonzero group gradient has rank one.
At the same time, since all groups share the same right direction $v$, the concatenated gradient also has rank one.
Therefore,
\[
    \sum_{i=1}^k r_i-r = k-1 .
\]

The group-wise nuclear-norm gain is
\[
    \sum_{i=1}^k \|G_i\|_*-\|G\|_*
    =
    \sum_{i=1}^k |a_i|
    -
    \left(\sum_{i=1}^k a_i^2\right)^{1/2}.
\]
If the group strengths are equal, i.e., $|a_i|=a$, then
\[
    \sum_{i=1}^k \|G_i\|_*-\|G\|_*
    =
    a(k-\sqrt{k}).
\]
Thus the comparison criterion \eqref{ieq:main-criterion} becomes
\[
    a(k-\sqrt{k})
    >
    \frac{\beta\eta}{2}(k-1).
\]
This example shows that finer splitting can increase both the group-wise whitening gain and the grouping-induced norm cost.
Therefore, even under ideal polar-factor Muon, grouping is beneficial only when the former dominates the latter.

\end{document}